\renewcommand{\algorithmiccomment}[1]{\hfill{// #1}}
\newcounter{ALC@tempcntr}% Temporary counter for storage
\newcommand{\LCOMMENT}[1]{%
    \setcounter{ALC@tempcntr}{\arabic{ALC@rem}}% Store old counter
    \setcounter{ALC@rem}{1}% To avoid printing line number
    \item \algorithmiccomment{#1}% Display comment + does not increment list item counter
    \setcounter{ALC@rem}{\arabic{ALC@tempcntr}}% Restore old counter
}%
\newcommand*\rot{\rotatebox{90}}
\newcommand{\citep}[1]{\cite{#1}}
\newcommand{\citet}[1]{\citeauthor{#1}~\shortcite{#1}}
\theoremstyle{plain}
\newtheorem{theorem}{Theorem}
\newtheorem{lemma}[theorem]{Lemma}
\newtheorem{corollary}[theorem]{Corollary}
\theoremstyle{definition}
\def\xypart{($\X$,$\Y$)-partition}
\def\co{{\sf \bf CO}}
\def\va{{\sf \bf VA}}
\def\ce{{\sf \bf CE}}
\def\im{{\sf \bf IM}}
\def\eq{{\sf \bf EQ}}
\def\ct{{\sf \bf CT}}
\def\se{{\sf \bf SE}}
\def\me{{\sf \bf ME}}
\newcommand\shrink[1]{}
\newcommand\underbracetxt[2]{\underbrace{{#1}}_{\text{{#2}}}}
\def\bpair(#1,#2){{
\setlength{\tabcolsep}{2.5pt}
\renewcommand{\arraystretch}{0.8}
\begin{tabular}{|c|c|}\hline\(#1\)&\(#2\)\\\hline\end{tabular}}}
\def\F(#1){{\langle#1\rangle}}
\def\apply{\texttt{Apply}}
\def\set{{\leftarrow}}
\def\cache{{\tt Cache}}
\def\uniqueD{{\tt UniqueD}}
\def\compress{{\tt Compress}}
\def\vtreelca{{\tt VtreeLCA}}
\def\partition{{\tt Partition}}
\def\nil{{\tt nil}}
\def\bop{\circ}
\def\x{\mathbf{x}}
\def\X{\mathbf{X}}
\def\y{\mathbf{y}}
\def\Y{\mathbf{Y}}
\def\Z{\mathbf{Z}}
\newcommand\name[1]{\ensuremath{\mathsf{#1}}}
\def\true{\name{\top}}
\def\false{\name{\bot}}
\def\c2d{{\sc c2d}}
\DeclareMathOperator{\poly}{poly}
\tikzstyle{vtree}=[
\def\ok(#1){{\it ok}{#1}}
\def\cd{{\sf \bf CD}}
\def\fo{{\sf \bf FO}}
\def\ac{{\sf \bf \wedge C}}
\def\oc{{\sf \bf \vee C}}
\def\nc{{\sf \bf \neg C}}
\def\sfo{{\sf \bf SFO}}
\def\bac{{\sf \bf \wedge BC}}
\def\boc{{\sf \bf \vee BC}}
\def\y{$\surd$}
\def\bn{$\bullet$}
\newcommand{\mytitle}{On the Role of Canonicity in Bottom-up Knowledge Compilation} % hide comments
\begin{document}
% The file aaai.sty is the style file for AAAI Press 
% proceedings, working notes, and technical reports.
%
\title{\mytitle}
\author{
Guy Van den Broeck \and 
Adnan Darwiche\\
Computer Science Department
\\University of California, Los Angeles\\ \texttt{\{guyvdb,darwiche\}@cs.ucla.edu}
}
\maketitle
\begin{abstract}
\begin{quote}
We consider the problem of bottom-up compilation of knowledge bases, which is usually predicated on the
existence of a polytime function for combining compilations using Boolean operators (usually called an \apply\ function). While such a polytime \apply\ function is known
to exist for certain languages (e.g., OBDDs) and not exist for others (e.g., DNNF), its existence for certain languages remains unknown. Among the latter
is the recently introduced language of Sentential Decision Diagrams (SDDs), for which a polytime \apply\ function exists for unreduced SDDs, but remains unknown for
reduced ones (i.e. canonical SDDs). We resolve this open question in this paper and consider some of its theoretical and practical implications. Some of the findings we 
report question the common wisdom on the relationship between bottom-up compilation, language canonicity and the complexity of the \apply\ function. 
\end{quote}
\end{abstract}

\section{Introduction} 
% \label{s:intro}

The Sentential Decision Diagram (SDD) is a recently proposed circuit representation of propositional knowledge bases~\citep{Darwiche11}.
The SDD is a target language for knowledge compilation~\citep{selman1996knowledge,darwicheJAIR02}, 
meaning that once a propositional knowledge base is compiled into an SDD, the SDD can be reused to answer multiple hard queries efficiently (e.g., clausal entailment or model counting). 
%Supported queries including consistency and equivalence checking.

SDDs subsume Ordered Binary Decision Diagrams (OBDD)~\citep{Bryant86} and come with a tighter size bound~\citep{Darwiche11}, 
while still being equally powerful as far as their polytime support for classical queries (e.g., the ones in~\citet{darwicheJAIR02}).
%SDDs therefore have the potential to replace OBDDs in applications such as hardware design and verification~\citep{meinel1998algorithms}.
Moreover, SDDs are subsumed by d-DNNFs~\citep{darwiche2001tractability}, which received much attention over the last decade, for fault diagnosis~\citep{elliot:06}, planning~\citep{palacios:05}, databases~\citep{suciu2011probabilistic}, but most importantly for probabilistic inference~\cite{chavira2006compiling,Chavira.Darwiche.AIJ.2008,FierensBTGR11}.
Even though SDDs are less succinct than d-DNNFs, they can be compiled {\em bottom-up,} just like OBDDs. For example, a clause can be compiled by disjoining the SDDs corresponding
to its literals, and a CNF can be compiled by conjoining the SDDs corresponding to its clauses. This bottom-up compilation is implemented using the \apply{} function, which combines
two SDDs using Boolean operators.\footnote{The \apply\ function (and its name) originated in the OBDD literature~\citep{Bryant86}}
Bottom-up compilation makes SDDs attractive for certain applications (e.g., probabilistic inference~\citep{ChoiKisaDarwiche13})
and can be critical when the knowledge base to be compiled is constructed incrementally (see the discussion in~\citet{pipatsrisawat08compilation}). 

According to common wisdom, a language supports bottom-up compilation only if it supports a polytime \apply\ function. For example, OBDDs are known to support bottom-up compilation 
and have traditionally been compiled this way. In fact, the discovery of SDDs was mostly driven by the need for bottom-up compilation, which was preceded by the discovery of 
{\em structured decomposability}~\cite{pipatsrisawat08compilation}:
a property that enables some Boolean operations to be applied in polytime. SDDs satisfy this property and stronger ones, leading to a polytime \apply\ function~\citep{Darwiche11}.
This function, however, assumes that the SDDs are unreduced (i.e., not canonical). For reduced SDDs, the 
existence of a polytime \apply\ function has been an open question since SDDs were first introduced (note, however, that both reduced and unreduced OBDDs are 
supported by a polytime \apply\ function).

We resolve this open question in this paper, showing that such an \apply\ function does not exist in general. We also pursue some theoretical and practical implications of this result,
on bottom-up compilation in particular. On the practical side, we reveal an empirical finding that seems quite surprising: bottom-up compilation with reduced SDDs is much more feasible
practically than with unreduced ones, even though the latter supports a polytime \apply\ function while the former does not. This finding
questions common convictions on the relative importance of a polytime \apply{} in contrast to canonicity as desirable properties for a language that supports efficient bottom-up compilation.
 On the theoretical side, we show that certain 
transformations (e.g., conditioning) can lead to blowing up the size of reduced SDDs, while they don't for unreduced SDDs. Finally, we identify a subset of SDDs for which
a polytime \apply\ exists even under reduction.

\shrink{
In the same way that OBDDs are characterized by total variable orders, SDDs are characterized by vtrees (a vtree is an ordered, binary tree whose leaves are labelled with variables).
It is known that for a given Boolean function and a total variable order, there is a unique {\em reduced} OBDD~\citep{Bryant86}. 
The same is true for SDDs: for a given Boolean function and a vtree, there is a unique {\em reduced} SDD which is based on a property known as {\em compression}~\citep{Darwiche11}. 
If compression is not enforced, \citet{Darwiche11} showed that the \apply{} function, which allows one to conjoin and disjoin SDDs, can be implemented in polynomial time.
On compressed SDDs, however, the complexity of \apply{} and other transformations were not known, leaving this as an open question.
}

\section{Background} 
% \label{s:back}

We will use the following notation for propositional logic.
Upper-case letters (e.g., $X$) denote propositional \emph{variables} and lower-case letters denote their \emph{instantiations} (e.g., $x$).
Bold letters represent sets of variables (e.g., $\X$) and their instantiations (e.g., $\x$).
A \emph{literal} is a variable or its negation.
A \emph{Boolean function} $f(\X)$ maps each instantiation $\x$ to $\true$ (true) or $\false$ (false).

\subsection{The SDD Representation}

\begin{figure}[tb]
  \centering
  \subfloat[An SDD]{
    \label{fig:sdd b}
    \includegraphics[width=0.29\textwidth,clip=true,angle=0]{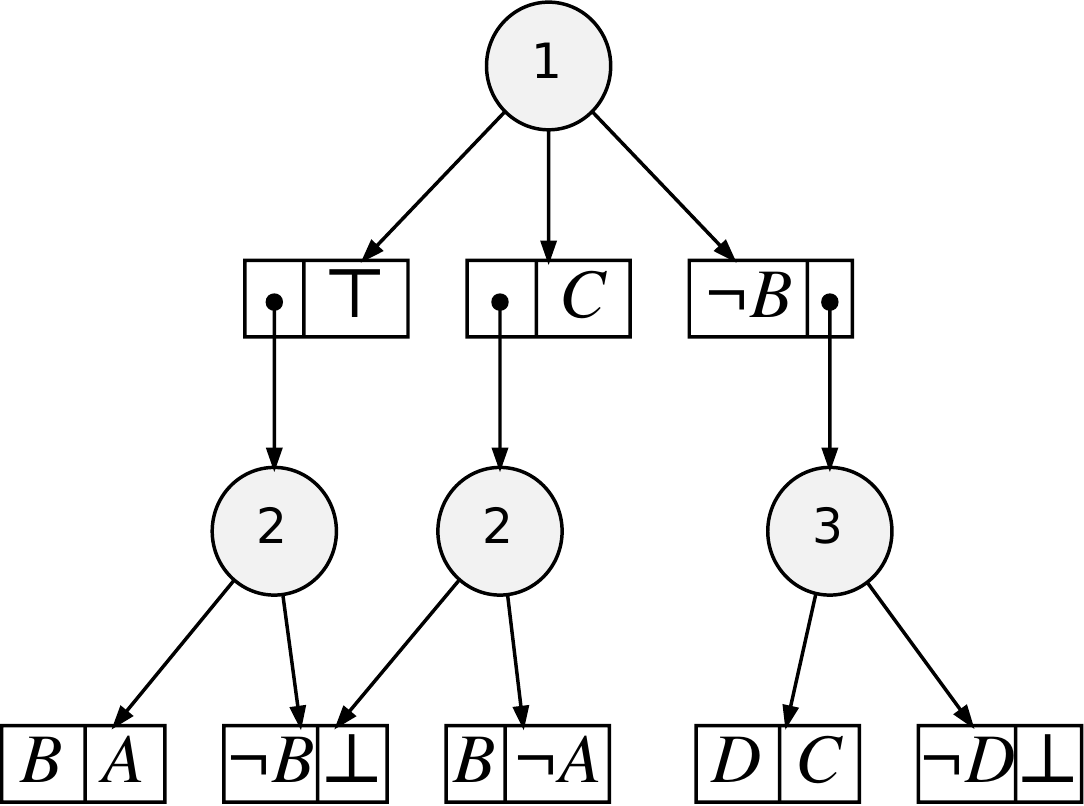}
  }~
  \subfloat[A vtree]{
    \label{fig:vt b}
    \begin{tikzpicture}[vtree]
      \node{\Tree 
        [.$1$ 
          [.$2$ 
%             \edge node[leftleaf] {$0$};
            $B$ 
%             \edge node[rightleaf] {$1$};
            $A$ 
          ] 
          [.$3$
%             \edge node[leftleaf] {$3$};
            $D$ 
%             \edge node[rightleaf] {$4$};
            $C$ 
          ] 
        ]};
    \end{tikzpicture}
  }
  \caption{An SDD and vtree for \((A\wedge B) \vee (B \wedge C) \vee (C \wedge D)\).}
  \label{fig:sdd notation}
\end{figure}

The Sentential Decision Diagram (SDD) is a newly introduced representation language for propositional knowledge bases~\cite{Darwiche11}. 
Figure~\ref{fig:sdd b} depicts an SDD: paired-boxes \(\bpair(p,s)\) are called {\em elements} and represent conjunctions (\(p \wedge s\)), where \(p\) is called a {\em prime} and \(s\) is called a {\em sub.} Circles are called {\em decision nodes} and represent disjunctions of their child elements.  

An SDD is constructed for a given {\em vtree,} which is a full binary tree whose leaves are variables; see for example Figure~\ref{fig:vt b}.
Every node in an SDD respects some vtree node (except for \(\top\) and \(\bot\)).
SDD literals respect the vtree leaf labeled with their variable. In Figure~\ref{fig:sdd b}, decision nodes are labeled with the vtree node they respect. 
Consider a decision node with elements \(\bpair(p_1,s_1), \ldots, \bpair(p_n,s_n),\) and suppose that it respects a vtree node \(v\) which has variables \(\X\) in its left subtree and variables \(\Y\) in its right subtree. 
We are then guaranteed that each prime \(p_i\) will only mention variables in \(\X\) and that each sub \(s_i\) will only mention variables in \(\Y\).  
Moreover, the primes are guaranteed to represent propositional sentences that are consistent, mutually exclusive, and exhaustive.
This type of decomposition is called an \xypart{}~\cite{Darwiche11}.
For example, the top decision node in Figure~\ref{fig:sdd b} has the following elements
\begin{align} \label{f:compressedxypart}
\{ (\underbracetxt{A \wedge B}{prime},
    \underbracetxt{\true}{sub}), 
   (\underbracetxt{\neg A \wedge B}{prime},
    \underbracetxt{C}{sub}), 
   (\underbracetxt{\neg B}{prime},
    \underbracetxt{D \wedge C}{sub}) \},
\end{align}
which correspond to an $(AB,CD)$-partition of the function \((A\wedge B) \vee (B \wedge C) \vee (C \wedge D)\).
One can verify that the primes and subs of this partition satisfy the properties mentioned above.

An \xypart{} is \emph{compressed} if it has distinct subs, and an SDD is compressed if its \xypart{}s are compressed.
A Boolean function may have multiple \xypart{}s, but the compressed partition is unique. 
Our example function has another $(AB,CD)$-partition, which is not compressed:
\begin{align} \label{f:xypart}
\{& (A \wedge B,
    \true), 
   (\neg A \wedge B,
    C), \nonumber\\
  & (A \wedge \neg B,
    D \wedge C), 
   (\neg A \wedge \neg B,
    D \wedge C) \}.
\end{align}
An uncompressed \xypart{} can be compressed by merging all elements $(p_1,s),\dots,(p_n,s)$ that share the same sub into one element $(p_1 \lor \dots \lor p_n,s)$.
Compressing (\ref{f:xypart}) combines the two last elements into $([A \wedge \neg B] \lor [\neg A \wedge \neg B],D \wedge C) = (\neg B,D \wedge C)$, resulting in~(\ref{f:compressedxypart}). This is the unique compressed $(AB,CD)$-partition.

Given a vtree, each Boolean function also has a unique \emph{compressed} SDD, when this property is combined with either {\em trimming} or {\em normalization}  properties~\cite{Darwiche11}.
These are weaker properties that mildly affect the size of an SDD. For example, 
a trimmed SDD contains no decision nodes of the form $\{(\top,\alpha)\}$ or $\{(\alpha,\top),(\neg \alpha, \bot)\}$ (we can trim an SDD by replacing these nodes with $\alpha$). 
Compressed and trimmed SDDs are canonical, and so are compressed and normalized SDDs~~\cite{Darwiche11}.

%Ordered binary decision diagrams (OBDDs) are a strict subset of SDDs: 
OBDDs correspond precisely to SDDs that are constructed using a special type of vtree, called a right-linear
vtree~\cite{Darwiche11}. The left child of each inner node in these vtrees is a variable.
%Theoretically, SDDs come with size upper bounds (based on treewidth)~\cite{Darwiche11} that are tighter than the size upper bounds that OBDDs come with (based on pathwidth)~\cite{PrasadCK99,HuangD04,FerraraPV05}.
With right-linear vtrees, compressed/trimmed SDDs correspond to reduced OBDDs, while compressed/normalized SDDs correspond to 
oblivious OBDDs~\cite{xue12} (reduced and oblivious OBDDs are also canonical).
We will refer to compressed and trimmed SDDs as {\em reduced} SDDs and restrict our attention to them in the rest of the paper.

The size of an OBDD depends critically on the underlying variable order. Similarly, the size of an SDD depends critically on the vtree used (right-linear vtrees correspond to variable orders).
Vtree search algorithms can sometimes find SDDs that are orders-of-magnitude more succinct than OBDDs found by searching for variable orders~\citep{ChoiDarwiche13}.  Such algorithms 
assume canonical SDDs, allowing one to search the space of SDDs by searching the space of vtrees.

\shrink{
The set of all SDDs will be denoted by \SDD{}, and the subset of canonical SDDs by \cSDD\ (i.e., compressed and trimmed). For a given Boolean function, these languages may contain
multiple SDDs that respect different vtrees. Suppose now that we have a function \(T\), called a {\em vtree function,} which maps each integer \(n\) into a specific vtree. Suppose further that the SDD for a function \(f(X_1,\ldots,X_n)\) respects the vtree \(T(n)\).\footnote{We are assuming that the function \(f(X_1,\ldots,X_n)\) essentially depends on all its variables.} 
This induces the subsets  \SDD$_T$ and \cSDD$_T$ of  \SDD\ and  \cSDD.
}

\subsection{Queries and Transformations}

\begin{table}
\centering
\small
\begin{tabular}{|c|c|c|c|c|} \hline
Query & Description & OBDD & SDD & d-DNNF \\ \hline \hline
\co & consistency & $\surd$ & $\surd$ & $\surd$ \\ \hline
\va & validity & $\surd$ & $\surd$ & $\surd$ \\  \hline
\ce &clausal entailment &  $\surd$ & $\surd$ & $\surd$ \\  \hline
\im &implicant check &  $\surd$ & $\surd$ & $\surd$ \\  \hline
\eq & equivalence check & $\surd$ & $\surd$ & ? \\  \hline
\ct &model counting &  $\surd$ & $\surd$ & $\surd$ \\  \hline
\se & sentential entailment &  $\surd$ & $\surd$ & \(\circ\) \\  \hline
\me & model enumeration & $\surd$ & $\surd$ & $\surd$ \\ \hline
\end{tabular}
\caption{$\surd$ means that a polytime algorithm exists for the corresponding language/query, while \(\circ\) means that no such algorithm
exists unless \(P=NP\). \label{tab:queries}}
\end{table}

SDDs are a strict subset of deterministic, decomposable negation normal form (d-DNNF). 
They are actually a strict subset of structured d-DNNF and, hence, support the same polytime queries supported by structured d-DNNF~\cite{pipatsrisawat08compilation};
see Table~\ref{tab:queries}. This makes SDDs as powerful as OBDDs in terms of their support for these queries.

\begin{algorithm}[tb]
\caption{\(\apply(\alpha,\beta,\bop)\)
\label{alg:apply}}
\begin{algorithmic}[1]
\IF{\(\alpha\) and \(\beta\) are constants or literals}
% \IF{\(\alpha\) and \(\beta\) are constants}
\RETURN \(\alpha \bop \beta\) \COMMENT{\emph{result is a constant or literal}}
\ELSIF{\(\cache(\alpha,\beta,\bop) \neq \nil\)} 
\RETURN \(\cache(\alpha,\beta,\bop)\) \label{ln:base end} \label{ln:cache} \COMMENT{\emph{has been computed before}}
\ELSE
%\STATE $T \gets \vtreelca(\alpha,\beta)$ \COMMENT{\emph{least common ancestor vtree node}}
% \LCOMMENT{\emph{normalization ensures \(\alpha,\beta\) respect same vtree node}}
% \STATE $T \gets$ vtree node respected by \(\alpha\) and \(\beta\)
\STATE \(\gamma \set \{\}\)  \label{ln:recurse start}
%\FORALL{elements \((p_i,s_i)\) in $\partition(\alpha,T)$} \label{ln:elem1}
%\FORALL{elements \((q_j,r_j)\) in $\partition(\beta,T)$} \label{ln:elem2}
\FORALL{elements \((p_i,s_i)\) in $\alpha$} \label{ln:elem1}
\FORALL{elements \((q_j,r_j)\) in $\beta$} \label{ln:elem2}
\STATE \(p \set \apply(p_i,q_j,\wedge)\)  \label{ln:consistent primes}
\IF{\(p\) is consistent} \label{ln:consistency}
\STATE \(s \set \apply(s_i,r_j,\bop)\)
\STATE add element \((p,s)\) to \(\gamma\) \label{ln:distinct subs}
\ENDIF
\ENDFOR
\ENDFOR
\STATE (optionally) $\gamma \gets \compress(\gamma)$ \label{ln:compress} \COMMENT{\emph{compression}}
%\STATE $\gamma \gets \trim(\gamma)$ \label{ln:trim} \COMMENT{\emph{trim the partition}}
\LCOMMENT{\emph{get unique decision node and return it}} 
\RETURN \(\cache(\alpha,\beta,\bop) \set \uniqueD(\gamma)\) \label{ln:recurse end}
\ENDIF
\end{algorithmic}
\end{algorithm}

SDDs satisfy stronger properties than structured d-DNNFs, allowing one, for example, to conjoin or disjoin two SDDs in polytime.
%This ability to conjoin and disjoin \SDDs{} efficiently is critical for the incremental construction of these circuits.
Algorithm~\ref{alg:apply} shows the outline of an \apply{} function~\citep{Darwiche11} that takes two SDDs $\alpha$ and $\beta$, and a binary Boolean operator $\circ$ (e.g., $\land$, $\lor$, xor), and returns the SDD for $\alpha \circ \beta$. This code assumes that the SDD is normalized instead of trimmed. The code for trimmed SDDs is similar, although a bit more detailed.
Line~\ref{ln:compress} optionally compresses each partition, in order to return a compressed SDD. Without compression, this algorithm has a time and space complexity of \(O(n m)\),
where \(n\) and \(m\) are the sizes of input SDDs~\cite{Darwiche11}. This comes at the expense of losing canonicity. Whether a polytime complexity can be attained under compression is an open question.

There are several implications of this question. For example, depending on the answer, one would know whether certain {\em transformations,} such as conditioning and existential quantification,
can be supported in polytime on reduced SDDs. Moreover, according to common wisdom, a negative answer may preclude bottom-up compilation from being feasible on reduced SDDs. We explore this question and its implications next.

\shrink{
The $\vtreelca(\alpha,\beta)$ function returns the least common ancestor of the vtree nodes respected by $\alpha$ and $\beta$, or if one of them is a constant, the vtree node respected by the other.
The $\partition(\alpha,T)$ function returns the unique compressed partition that corresponds to $\alpha$ and that respects $T$.\footnote{If $\alpha$ is $\top$ ($\bot$), 
then $\partition(\alpha,T)$ returns \ $\{(\top,\top)\}$ ($\{(\top,\bot)\}$).
If $\alpha$ respects $T$, then $\partition(\alpha,T)$ returns the partition of the root decision node of \(\alpha\). 
If $\alpha$ respects a node in the left (right) subtree of $T$, then $\partition(\alpha,T)$ returns $\{(\alpha,\top),(\neg \alpha,\bot)\}$ ($\{(\top, \alpha)\}$).}
}

\section{Complexity of \apply\ on Reduced SDDs} 
% \label{s:compress}

The size of a decision node is the number of its elements, and the size of an SDD is the sum of sizes attained by its decision nodes. 
We now show that reduction, given a fixed vtree, may blow up the size of an SDD.

\shrink{
\begin{theorem}
\SDDs{} are strictly more succinct than \cSDDs{}, that is, there does not exist a polynomial $\poly$ for which every sentence $\alpha \in \SDD$ has an equivalent sentence $\beta \in \cSDD$ such that $|\beta| \leq \poly(|\alpha|)$.
\end{theorem}
\begin{corollary} \label{thm:compress}
  Suppose that we are given a vtree $T$ and an \SDD{}~$\alpha$ that respects $T$.
  The complexity of obtaining the unique \cSDD{}~$\beta$ that respects $T$, and that is equivalent to $\alpha$, can grow worst-case exponentially in the size of~$\alpha$.
\end{corollary}
}
\begin{theorem} \label{theo:compression}
There exists a class of Boolean functions \(f_m(X_1,\ldots,X_m)\) and corresponding vtrees \(T_m\) such that \(f_m\) has an SDD of size \(O(m^2)\) wrt vtree \(T_m\), yet
the reduced SDD of function \(f_m\) wrt vtree \(T_m\) has size \(\Omega(2^m)\).
\end{theorem}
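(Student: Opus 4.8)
The plan is to establish the two bounds separately, via a purpose-built construction whose only job is to make compression costly. The guiding idea: arrange $f_m$ so that, at the root of the vtree, its $(\mathbf{X},\mathbf{Y})$-partition has just two cofactors, one of which is the indicator of a set $C$ that splits into $m$ disjoint ``layers'', each layer having a constant-size SDD. An unreduced SDD may keep these layers as separate (simple) primes, so it stays small; but compression must merge the layers of $C$ into a single prime $\mathbf{1}_C$ whose canonical SDD over the left subtree is exponential. So after the construction, the work is (a) exhibit the small unreduced SDD explicitly, and (b) prove a size lower bound on $\mathbf{1}_C$ as a sub-SDD.

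Concretely, I would take variables $P_1,\dots,P_m$, $Q_1,\dots,Q_m$ and one extra variable $Y$, put
\[
  \psi \;=\; \bigvee_{j=1}^{m}\Bigl(P_j \wedge Q_j \wedge \bigwedge_{i\neq j}\neg Q_i\Bigr),
\]
and set $f_m = (\psi \iff Y)$. For $T_m$, let the root $r$ split $\{P_1,\dots,P_m,Q_1,\dots,Q_m\}$ from $\{Y\}$; inside the left subtree $T^L$, let its root $u_0$ split the $P_i$'s from the $Q_i$'s, each block carried by a right-linear sub-vtree, and let $u_Q$ be the root of the $Q$-block. For the upper bound I would write down the $(\mathbf{X},\mathbf{Y})$-partition of $f_m$ at $r$ with elements $(\mathbf{1}_{D_j},Y)$, where $D_j=\{\,\mathbf p\colon p_j{=}1\,\}\cap\{\,\mathbf q = e_j\,\}$ ($e_j$ the $j$-th unit vector), together with $(\mathbf{1}_{D'_j},\neg Y)$ for $D'_j=\{\,p_j{=}0\,\}\cap\{\,\mathbf q = e_j\,\}$ and one element $(\mathbf{1}_E,\neg Y)$ with $E=\{\,\mathbf q\text{ not a unit vector}\,\}$. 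One checks this is a legal partition (the primes are consistent, mutually exclusive, exhaustive) and that it computes $f_m$, using that $\psi$ evaluates to $P_j$ when $\mathbf q=e_j$ and to $\false$ otherwise. Each prime here is a term or the ``not-exactly-one-of-$Q$'' function, each with an $O(m)$-size SDD w.r.t.\ $T^L$, and the subs are the literals $Y,\neg Y$; so this yields an unreduced SDD of size $O(m^2)$.

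For the lower bound, in the reduced SDD of $f_m$ w.r.t.\ $T_m$ the root must be a decision node respecting $r$ (as $f_m$ genuinely depends on $Y$ and on the $P_i,Q_i$, trimming cannot remove it). Its subs lie in $\{Y,\neg Y\}$ (the cofactor of $f_m$ at any $(\mathbf p,\mathbf q)$ is $Y$ or $\neg Y$), and since a compressed node has distinct subs and the primes carrying a fixed sub must together cover exactly the corresponding cofactor class, the node is forced to be $\{(\mathbf{1}_C,Y),(\mathbf{1}_{\bar C},\neg Y)\}$ with $C=\{(\mathbf p,\mathbf q)\colon\psi(\mathbf p,\mathbf q)=1\}$. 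Thus the prime $\mathbf{1}_C$ is the root of an SDD for $\mathbf{1}_C$ w.r.t.\ $T^L$, and I would invoke the standard counting fact that any SDD w.r.t.\ a vtree contains at least as many decision nodes respecting an internal vtree node $v$ as the represented function has distinct subfunctions over $\mathrm{vars}(v)$. At $v=u_Q$: fixing $P_1,\dots,P_m=\mathbf p$ turns $\mathbf{1}_C$ into $\mathbf q\mapsto[\mathbf q\in\{e_j\colon p_j=1\}]$, and these are pairwise distinct across the $2^m$ choices of $\mathbf p$. Hence the reduced SDD of $f_m$ contains at least $2^m$ decision nodes, i.e.\ has size $\Omega(2^m)$.

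The main obstacle is not a calculation but getting the SDD bookkeeping exactly right in two places. First, justifying that the reduced root decision node is forced to be $\{(\mathbf{1}_C,Y),(\mathbf{1}_{\bar C},\neg Y)\}$ needs the precise definitions of compression and trimming (including that a sub-SDD of a trimmed SDD is trimmed, so that the prime genuinely \emph{is} a canonical SDD for $\mathbf{1}_C$) and the observation that primes sharing a fixed sub partition exactly that cofactor class. Second, the ``distinct-subfunctions'' lower bound for SDDs at an internal vtree node $v$ — which I would prove by the OBDD-style routing argument: each assignment to the variables above $v$ follows the unique consistent prime at every decision node on the way down to a single node computing that cofactor, so distinct cofactors demand distinct nodes. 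The upper-bound side is routine once the partition above is written out, modulo reconciling it with the trimmed-SDD conventions behind Algorithm~\ref{alg:apply} (and, under normalization instead of trimming, inflating literal primes to respect $u_0$'s left child, which costs only an extra $O(m)$ factor and keeps the bound $O(m^2)$).
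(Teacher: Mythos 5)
Your proposal is correct and takes essentially the same route as the paper's proof: both build a function whose uncompressed root partition has $\Theta(m)$ elements sharing a constant or literal sub, so that compression is forced to merge them into a single prime equal to a ``selector'' function with $2^m$ pairwise-distinct cofactors over one variable block (your $\mathbf{1}_C$ plays the role of the paper's $f^b_n$ in Lemma~\ref{thm:expsddx}). The only differences are cosmetic --- the paper uses $f^a_n$ with a non-essential variable $Z$ and subs $\top/\bot$ where you use $\psi \Leftrightarrow Y$ with subs $Y/\neg Y$, and it gets the lower bound by directly counting the elements of the unique compressed partition at the left vtree child rather than counting decision nodes respecting $u_Q$.
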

\begin{proof}
  Consider the function 
  $f^a_n(\X,\Y,Z) = \bigvee_{i=1}^n  \left(\bigwedge_{j=1}^{i-1} \neg Y_j\right) \land Y_i \land X_i$ which has \(2n+1\) variables. Of these, $Z$ is non-essential.
  Consider a vtree $T_n$ of the form 
  \begin{center}
    \begin{tikzpicture}[vtree]
      \node{\Tree [.1 [.2 $\X$ $\Y$ ] $Z$ ]};
    \end{tikzpicture}    
  \end{center}
  where the sub-vtrees over variables $\X$ and $\Y$ are arbitrary.
  We will now construct an uncompressed SDD for this function using vtree $T_n$ and whose size is \(O(n^2)\).
  We will then show that the compressed SDD for this function and vtree has a size \(\Omega(2^n)\).
  
  The first step is to construct a partition of FUNCTION $f^a_n$ that respects the root vtree node, that is, an (\textbf{XY},Z)-partition. Consider
  \begin{align*}
  \left\{ 
    \begin{array}{l}
      (Y_1 \land X_1,\true), \\
      (\neg Y_1 \land Y_2 \land X_2,\true),\\
      \dots,\\
      (\neg Y_1 \land \dots \land \neg Y_{n-1} \land Y_n \land X_n,\true),\\
      (Y_1 \land \neg X_1,\false), \\
      (\neg Y_1 \land Y_2 \land \neg X_2,\false),\\
      \dots,\\
      (\neg Y_1 \land \dots \land \neg Y_{n-1} \land Y_n \land \neg X_n,\false),\\
      (\neg Y_1 \land \dots \land \neg Y_n,\false)
    \end{array} 
  \right\},
  \end{align*}
  which is equivalently written as
  \begin{align*}
  \bigcup_{i=1}^n & \left\{ \left(  \bigwedge_{j=1}^{i-1} \neg Y_j  \land Y_i \land X_i,\true\right),\right.\\
  \textstyle
  & ~~~ \left.\left(  \bigwedge_{j=1}^{i-1} \neg Y_j  \land Y_i \land \neg X_i,\false \right) \right\}
  \cup \left\{ \left( \bigwedge_{j=1}^{n} \neg Y_j,\false \right)\right\}.
  \end{align*}
  The size of this partition is $2n+1$, and hence linear in $n$. It is \emph{uncompressed}, because there are $n$ elements that share sub $\true$ and $n+1$ elements that share sub $\false$.
  The subs already respect the leaf vtree node labelled with variable~$Z$.
  
  In a second step, each of prime above is written as a compressed \xypart{} that respects the left child of the vtree root. Prime \(\bigwedge_{j=1}^{i-1} \neg Y_j \land Y_i \land X_i\) becomes 
  \begin{align*}
    \left\{ \left(\phantom{\neg} X_i , \bigwedge_{j=1}^{i-1} \neg Y_j  \land Y_i \right), \left(\neg X_i,\false\right)\right\},  
  \end{align*}
  prime \( \bigwedge_{j=1}^{i-1} \neg Y_j  \land Y_i \land \neg X_i \) becomes
  \begin{align*}
  \left\{ \left(\neg X_i,\bigwedge_{j=1}^{i-1} \neg Y_j  \land Y_i \right), \left(\phantom{\neg} X_i, \false\right) \right\}
  \end{align*}
  and prime \(\bigwedge_{j=1}^{n} \neg Y_j\) becomes
  \begin{align*}
   \left\{ \left(\true, \bigwedge_{j=1}^{n} \neg Y_j \right)\right\}.
  \end{align*}
  The sizes of these partitions are bounded by \(2\).
  
  Finally, we need to represent the above primes as SDDs over variables \(\X\) and the subs as SDDs over variables \(\Y\).
  Since these primes and subs correspond to terms (i.e. conjunctions of literals), each has a compact SDD representation, independent of the chosen sub-vtree over
  variables $\X$ and $\Y$.
  For example, we can choose a right-linear vtree over variables \(\X\), and similarly for variables \(\Y\), leading to an OBDD representation of each prime and sub,
  with a size linear in $n$ for each OBDD. The full SDD for function \(f^a_n\) will then have a size which is \(O(n^2)\). Recall that this SDD is uncompressed as some of its decision nodes have elements with equal subs.
  
  The compressed SDD for this function and vtree is unique. We now show that its size must be \(\Omega(2^n)\).
  We first observe that the unique, compressed (\textbf{XY},Z)-partition of function \(f^a_n\) is 
  \begin{align*}
  &\left\{ \left( \bigvee_{i=1}^n  \left(\bigwedge_{j=1}^{i-1} \neg Y_j\right)  \land Y_i \land X_i,\true\right), \right. \\
  & ~~~~ \left.\left( \left[ \bigvee_{i=1}^n  \left(\bigwedge_{j=1}^{i-1} \neg Y_j\right)  \land Y_i \land \neg X_i \right] \lor \left[ \bigwedge_{j=1}^{n} \neg Y_j \right],\false \right) \right\}.
  \end{align*}
  Its first sub is the function 
  \begin{align*}
    f_n^b(\X,\Y) = \bigvee_{i=1}^n \left(\bigwedge_{j=1}^{i-1} \neg Y_j\right) \land Y_i \land X_i,
  \end{align*}
  which we need to represent as an \xypart{} to respect left child of the vtree root. However, \citet{xue12} proved the following.
  \begin{lemma} \label{thm:expsddx}
    The compressed \xypart{} of $f_n^b(\X,\Y)$ has $2^n$ elements.
  \end{lemma}
  This becomes clear when looking at the function $f_n^b$ after instantiating the $\X$-variables. Each distinct $\x$ results in a unique subfunction $f_n^b(\x,\Y)$, and all states $\x$ are mutually exclusive and exhaustive. Therefore, 
  $$\{(\x,f_n^b(\x,\Y))~|~ \x \text{ instantiates } \X \}$$ 
  is the unique, compressed \xypart{} of function $f_n^b(\X,\Y)$, which has $2^n$ elements. Hence, the compressed SDD must have size \(\Omega(2^n)\).
\end{proof}

Theorem~\ref{theo:compression} has a number of implications, 
which are summarized in Table~\ref{tab:transformations}; see also \citet{darwicheJAIR02}.
\begin{theorem} \label{theo:transformations}
The results in Table~\ref{tab:transformations} hold.
\end{theorem}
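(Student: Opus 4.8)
The plan is to verify Table~\ref{tab:transformations} entry by entry, in the usual fixed-vtree formulation of the transformations (as with the fixed-order \OBDD{} language of \citet{darwicheJAIR02}). Most entries are inherited. The \OBDD{} column is \citet{darwicheJAIR02}. For unreduced SDDs, the positive entries come from \citet{Darwiche11}: the $O(nm)$ \apply{} without compression gives polytime \bac{}, \boc{} and \nc{} (negate every sub recursively, leaving the decision structure intact), polytime \cd{} (push the conditioning term down), and hence polytime \sfo{} via $\exists X\,.\,\alpha=(\alpha|X)\vee(\alpha|\neg X)$; the negative unreduced-SDD entries that agree with the \OBDD{} column --- \ac{}, \oc{}, \fo{} --- transfer downward unchanged, since the hard instances there (CNFs/DNFs with no subexponential structured representation; iterated quantification that squares the size) already witness hardness for the weaker, unreduced language, and whether each is unconditional or conditional on $\mathrm{P}\neq\mathrm{NP}$ is as in \citet{darwicheJAIR02}. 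So the content is the reduced-SDD column.

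In the reduced-SDD column the only positive transformation is \nc{}: negating a reduced SDD negates every sub recursively and changes nothing else; since each decision node's primes form an \xypart{}, $\neg\bigvee_i(p_i\wedge s_i)=\bigvee_i(p_i\wedge\neg s_i)$, and distinctness of subs (compression) and the two trimming patterns are preserved, the output is reduced and of the same size up to a constant factor. In particular, for every function $f$ and every vtree $v$, the reduced SDD of $\neg f$ wrt $v$ has size within a constant factor of that of $f$ wrt $v$; I use this below.

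Every remaining reduced-SDD entry is negative, and for each I exhibit polynomial-size reduced SDDs whose transformation is \emph{forced} to have size $\Omega(2^{n})$, reusing the ingredients of the proof of Theorem~\ref{theo:compression}: the function $f_n^b=\bigvee_{i=1}^{n}t_i$ with $t_i=\bigl(\bigwedge_{j<i}\neg Y_j\bigr)\wedge Y_i\wedge X_i$, its uncompressed $(2n+1)$-element partition with primes $\mathrm{prime}_1,\dots,\mathrm{prime}_{2n+1}$ (the $t_i$, the $t_i$ with $X_i$ negated, and $\bigwedge_j\neg Y_j$), and a vtree $v_l$ over $\X\cup\Y$ with $\X$ on the left and $\Y$ on the right, so that by Lemma~\ref{thm:expsddx} (and the $\Omega(2^n)$ consequence drawn from it in that proof) $f_n^b$ has reduced SDD of size $\Omega(2^{n})$ wrt $v_l$. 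A routine simultaneous induction on the vtree shows that a term or a clause has a reduced SDD of size linear in its literal count wrt \emph{any} vtree. Then: (i) for \oc{}, the $n$ terms $t_i$ as reduced SDDs wrt $v_l$ have total size $O(n^{2})$, but $\bigvee_i t_i=f_n^b$ has reduced SDD of size $\Omega(2^{n})$ wrt $v_l$; (ii) for \ac{}, dually, $\neg f_n^b=\bigwedge_{i=1}^{n}\bigl(\bigl(\bigvee_{j<i}Y_j\bigr)\vee\neg Y_i\vee\neg X_i\bigr)$ is a conjunction of $n$ small clauses, and $\neg f_n^b$ has reduced SDD of size $\Omega(2^{n})$ wrt $v_l$ by the negation remark; (iii) for \bac{}/\boc{}, equivalently the nonexistence of a polytime \apply{} on reduced SDDs, add fresh variables $W_1,\dots,W_{2n+1}$, let $v$ have left child $v_l$ and a right-linear right child over the $W_k$, and set $G_n=\bigvee_k(\mathrm{prime}_k\wedge W_k)$ and $T=\bigwedge_{k\le n}W_k\wedge\bigwedge_{k>n}\neg W_k$; the partition $\{(\mathrm{prime}_k,W_k)\}$ of $G_n$ has pairwise distinct subs, hence is compressed, hence is the root of the reduced SDD of $G_n$ wrt $v$, of size $O(n^{2})$ (each prime a term, each sub a literal), and $T$ is a term of size $O(n)$, yet $G_n\wedge T=f_n^b\wedge T$ (with $T$ over the $W_k$), whose compressed left--right partition wrt $v$ is $\{(f_n^b,T),(\neg f_n^b,\false)\}$ (distinct subs, not a trimming pattern since $T\neq\true$), so its first prime is $f_n^b$ and its reduced SDD wrt $v$ has size $\Omega(2^{n})$, with \boc{} the De~Morgan dual $\neg G_n\vee\neg T$; (iv) for \cd{}, conditioning $G_n$ on the term $W_1\wedge\cdots\wedge W_n\wedge\neg W_{n+1}\wedge\cdots\wedge\neg W_{2n+1}$ yields $f_n^b$ over the conditioned vtree, which is exactly $v_l$, of size $\Omega(2^{n})$ against input $O(n^{2})$, and for \sfo{}/\fo{} the reduced SDD $\alpha=(U\wedge\neg G_n)\vee(\neg U\wedge\neg T)$ (with $U$ fresh, at the root of a vtree whose other side is $v$) has size $O(n^{2})$ while $\exists U\,.\,\alpha=\neg G_n\vee\neg T=\neg(G_n\wedge T)$ has reduced SDD of size $\Omega(2^{n})$ wrt $v$.

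The one place needing care --- and the main obstacle --- is making the size bounds in (iii)--(iv) tight in \emph{both} directions. On the input side, one must check that $G_n$, $T$, the $t_i$, the clauses, and $\alpha$ are genuinely compressed and trimmed and of the stated polynomial size, so that no exponential work is hidden ``on the way in''; this is exactly where the linear-SDD fact for terms and clauses, and the observation that a partition with pairwise distinct, nontrivial subs is already the unique compressed one, do their job. On the output side one must check the blow-up is forced; this uses the fixed-vtree convention (the functions $G_n\wedge T$ and $\neg G_n\vee\neg T$ do have small SDDs under other vtrees, e.g.\ one interleaving $\X$ and $\Y$) together with the structural fact that a decision node already in compressed, non-trimmable form is literally the root of the reduced SDD, so the output size is bounded below by the reduced-SDD size of its prime $f_n^b$ wrt $v_l$, i.e.\ $\Omega(2^{n})$ by Lemma~\ref{thm:expsddx}. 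With these two routine lemmas and the constant-factor negation remark in hand, every entry of Table~\ref{tab:transformations} follows.
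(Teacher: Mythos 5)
Your proposal is correct, and all entries of Table~\ref{tab:transformations} are covered, but the reduced-SDD negative results are established by a genuinely different family of counterexamples than the paper's. The paper builds one bespoke function $f^c_n(\X,\Y,\Z,W)$ whose subs $W\lor Z_i$ collapse to $\top$ under conditioning on the single literal $W$ (giving $\neg\cd$) and to $W$ under conjunction with $W$ (giving $\neg\bac$); it then gets $\neg\boc$ from $\neg\bac$ via functional completeness of $\{\vee,\neg\}$ together with polytime $\nc$, and $\neg\sfo$ by the reduction ``polytime singleton forgetting would give polytime bounded disjunction'' using the auxiliary-variable SDD $\{(L,\beta),(\neg L,\gamma)\}$. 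You instead attach distinct indicator literals $W_k$ as subs to the $2n+1$ primes of the uncompressed partition from Theorem~\ref{theo:compression} (your $G_n$) and collapse them with one term $T$; this single gadget delivers $\neg\bac$, $\neg\boc$ (by De Morgan), $\neg\cd$, and $\neg\sfo$/$\neg\fo$ (running the paper's $L$-variable trick forward as an explicit counterexample rather than as a reduction). Both routes bottleneck on exactly the same lemma (Lemma~\ref{thm:expsddx}: compression forced to merge primes whose disjunction is $f^b_n$), so the mathematical content is the same; your version is arguably more uniform, and your direct constructions for unbounded $\ac$/$\oc$ are self-contained where the paper simply inherits them from the OBDD results of \citet{darwicheJAIR02}. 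Two small caveats: your $\cd$ witness conditions on a term of $2n+1$ literals, which suffices for $\cd$ as defined in the knowledge compilation map but is weaker than the paper's statement that even a \emph{single-literal} conditioning can blow up a reduced SDD (Theorem~\ref{thm:cond}); and your argument leans on the auxiliary claims that terms/clauses have linear reduced SDDs wrt any vtree and that a partition with pairwise distinct subs not matching a trimming pattern is literally the root of the reduced SDD --- both are true and implicitly used by the paper as well, but you should state them as lemmas since the lower bounds in (iii)--(iv) rest on them.
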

The proof of this theorem is in the supplementary material.
First, combining two reduced SDDs (e.g., using the conjoin or disjoin operator) may lead to a reduced SDD whose size is exponential in the size of inputs.
Hence, if we activate compression in Algorithm~\ref{alg:apply}, the algorithm may take exponential time in the worst-case.
Second, conditioning a reduced SDD on a literal may exponentially increase its size (assuming the result is also reduced).
Third, forgetting a variable (i.e., existentially quantifying it) from a reduced SDD may exponentially increase its size (again, assuming that the result is also reduced).
These results may seem discouraging. However, we argue next that, in practice, working with reduced SDDs is actually favorable despite the lack of 
polytime guarantees on these transformations. Moreover, we identify a class of vtrees, called {\em bounded vtrees,} which include right-linear ones,
and which induce reduced SDDs with a polytime \apply\ function.

\begin{table}[tb]
\centering
\small
\begin{tabular}{|c|c|c|c|}\hline
{\bf Notation} & {\bf Transformation} & \rot{SDD} & \rot{\shortstack[l]{Reduced~ \\ SDD}} \\
\hline \hline
\cd & polytime conditioning & \y  & \bn \\
\hline
\fo & polytime forgetting & \bn & \bn \\
\hline
\sfo & polytime singleton forgetting & \y & \bn \\
\hline
\(\ac\) & polytime conjunction & \bn & \bn \\
\hline
\(\bac\) & polytime bounded conjunction & \y & \bn \\
\hline
\(\oc\) & polytime disjunction & \bn & \bn \\
\hline
\(\boc\) & polytime bounded disjunction & \y & \bn \\
\hline
\(\nc\) & polytime negation & \y & \y \\
\hline
%\(\cm\) & polytime cardinality minimization & \unk & \unk \\
%\hline
\end{tabular}
\caption{\y\ means ``satisfies''; \bn\ means ``does not satisfy''. Satisfaction means the existence of a polytime algorithm that implements the
corresponding transformation.
\label{tab:transformations}}
\end{table}

\section{Canonicity or a Polytime \apply?} 
% \label{s:discuss}

\begin{table}[tb]
\footnotesize
\centering
\begin{tabular}{|c|rrr|}
\hline
Name & Reduced & Reduced & Unreduced \\
          & SDDs+s & SDDs & SDDs \\
\hline
% s27 & 108 & 315 & 4,551\\
% \vdots & \vdots & \vdots & \vdots\\
% s208.1 & 1,913 & 137,798 & n/a\\
C17 & 99  & 171  & 286 \\
majority & 123  & 193  & 384 \\
b1 & 166  & 250  & 514 \\
cm152a & 149  & 3,139  & 18,400 \\
cm82a & 225  & 363  & 683 \\
cm151a & 614  & 1,319  & 24,360 \\
cm42a & 394  & 823  & 276,437 \\
cm138a & 463  & 890  & 9,201,336 \\
decod & 471  & 810  & 1,212,302 \\
tcon & 596  & 1,327  & 618,947 \\
parity & 549  & 978  & 2,793 \\
cmb & 980  & 2,311  & 81,980 \\
cm163a & 886  & 1,793  & 21,202 \\
pcle & 785  & 1,366  & n/a \\
x2 & 785  & 1,757  & 12,150,626 \\
cm85a & 1,015  & 2,098  & 19,657 \\
cm162a & 907  & 2,050  & 153,228 \\
cm150a & 1,603  & 5,805  & 17,265,164 \\
pcler8 & 1,518  & 4,335  & 15,532,667 \\
cu & 1,466  & 5,789  & n/a \\
pm1 & 1,810  & 3,699  & n/a \\
mux & 1,825  & 6,517  & n/a \\
cc & 1,451  & 6,938  & n/a \\
unreg & 3,056  & 668,531 & n/a \\
ldd & 1,610  & 2,349  & n/a \\
count & 4,168  & 51,639  & n/a \\
comp & 2,212  & 4,500  & 205,105 \\
f51m & 3,290  & 6,049 & n/a\\
my\_adder & 2,793  & 4,408  & 35,754 \\
cht & 4,832  & 13,311  & n/a \\
\hline
\end{tabular}
\caption{LGSynth89 SDD sizes.}
\label{tab:sizes}
\end{table}

\begin{table}[tb]
\footnotesize
\centering
\begin{tabular}{|c|rrr|}
\hline
Name & Reduced & Reduced & Unreduced \\
          & SDDs+s & SDDs & SDDs \\
\hline
% s27 & 108 & 315 & 4,551\\
% \vdots & \vdots & \vdots & \vdots\\
% s208.1 & 1,913 & 137,798 & n/a\\
C17 & 0.00 & 0.00 & 0.00 \\
majority & 0.00 & 0.00 & 0.00 \\
b1 & 0.00 & 0.00 & 0.00 \\
cm152a &0.01  & 0.01 & 0.02 \\
cm82a & 0.01 & 0.00 & 0.00 \\
cm151a & 0.04 & 0.00 & 0.04 \\
cm42a & 0.03 & 0.00 & 0.10 \\
cm138a &0.02  & 0.01 & 109.05 \\
decod & 0.04 & 0.01 & 1.40 \\
tcon & 0.05 & 0.00 & 0.33 \\
parity & 0.02 & 0.00 & 0.00 \\
cmb &0.12  &  0.02& 0.06 \\
cm163a & 0.06 & 0.00 & 0.02 \\
pcle & 0.07 &0.01  & n/a \\
x2 & 0.08 & 0.02 & 19.87 \\
cm85a & 0.08 & 0.01 & 0.03 \\
cm162a  &0.08  & 0.01 & 0.16 \\
cm150a  &0.16  & 0.06 & 60.37 \\
pcler8  & 0.18 & 0.05 & 33.32 \\
cu  & 0.19 & 0.10 & n/a \\
pm1  & 0.27 & 0.05 & n/a \\
mux  & 0.19 & 0.09 &  n/a\\
cc & 0.22 & 0.04 & n/a \\
unreg  & 0.66 & 263.06 & n/a \\
ldd  & 0.23 & 0.10 & n/a \\
count  &1.05  & 0.24 & n/a \\
comp & 0.24 & 0.01 & 0.22 \\
f51m  & 0.52 & 0.32 &  n/a\\
my\_adder  & 0.24 & 0.02 & 0.04 \\
cht  & 1.24 & 0.36 & n/a \\
\hline
\end{tabular}
\caption{LGSynth89 compilation times, in seconds.}
\label{tab:times}
\end{table}

One has two options when working with SDDs. The first option is to work with unreduced SDDs, which are not canonical, but are supported by a polytime
\apply\ function. The second option is to work with reduced SDDs, which are canonical
but loose the advantage of a polytime \apply\ function. 
%Two considerations can be brought up when evaluating these options. 
The classical reason for seeking canonicity is that
it leads to a very efficient equivalence test, which takes constant time (both reduced and unreduced SDDs support a polytime equivalence test, but
the one known for unreduced SDDs is not a constant time test). The classical reason for seeking a polytime \apply\ function is to enable bottom-up compilation,
that is, compiling a knowledge base (e.g., CNF or DNF) into an SDD by repeated application of the \apply\ function to components of the knowledge base (e.g., clauses or terms).
If our goal is efficient bottom-up compilation, one may expect that unreduced SDDs provide a better alternative. Empirically, however, this turns out to be
false. Our goal in this section is to shed some light on this phenomena through some empirical evidence and then an explanation. 

We used the SDD package provided by the Automated Reasoning Group at UCLA\footnote{Available at \url{http://reasoning.cs.ucla.edu/sdd/}} in our experiments.
The package works with reduced SDDs, but can be adjusted to work with unreduced SDDs as long as dynamic vtree 
search is not invoked.\footnote{Dynamic vtree search requires reduced SDDs as this reduces the search space over SDDs into one over vtrees.}
In our first experiment, we compiled CNFs from the LGSynth89 benchmarks into the following:
\begin{itemize}
  \item[--] Reduced SDDs respecting an arbitrary vtree. Dynamic vtree search is used to minimize the size of the SDD during compilation, starting from a balanced vtree.
  \item[--] Reduced SDDs respecting a fixed balanced vtree.
  \item[--] Unreduced SDDs respecting a fixed balanced vtree.
\end{itemize}

Tables~\ref{tab:sizes} and~\ref{tab:times} show the corresponding sizes and compilation times. According to these results, 
unreduced SDDs end up several orders of magnitude larger than the reduced ones, with or without dynamic vtree search.
For the harder problems, this translates to orders-of-magnitude increase in compilation times. 
Often, we cannot even compile the input without reduction (due to running out of 4GB of memory), even on relatively easy benchmarks.
For the easiest benchmarks, dynamic vtree search is slower due to the overhead, but yields smaller compilations. 
The benefit of vtree search shows only in harder problems (e.g.,~``unreg'').

This experiment clearly shows the advantage of reduced SDDs over unreduced ones, even though the latter supports a polytime \apply\ function while the former does not. This begs an explanation and we provide one next that we back up by additional experimental results.

The benefit of reduced SDDs is canonicity, which plays a critical role in the performance of the \apply\ function. 
Consider in particular Line~\ref{ln:cache} of Algorithm~\ref{alg:apply}. The test \(\cache(\alpha,\beta,\bop) \neq \nil\) checks whether SDDs \(\alpha\)
and \(\beta\) have been previously combined using the Boolean operator \(\bop\). 
Without canonicity, it is possible that we would have combined some \(\alpha^\prime\) and \(\beta^\prime\) using
\(\bop\), where SDD \(\alpha^\prime\) is equivalent to, but distinct from SDD \(\alpha\) (and similarly for \(\beta^\prime\) and \(\beta\)). 
In this case, the cache test would fail, causing \apply\ to recompute the same result again. 
Worse, the SDD returned by \(\apply(\alpha,\beta,\bop)\) may be distinct from the SDD returned by 
\(\apply(\alpha^\prime,\beta^\prime,\bop)\),
even though the two SDDs are equivalent. This redundancy also happens when \(\alpha\) is not equivalent to \(\alpha^\prime\) (and similarly for \(\beta\) and \(\beta^\prime\)),
\(\alpha \circ \beta\) is equivalent to \(\alpha^\prime \circ \beta^\prime\), but the result returned by \(\apply(\alpha,\beta,\bop)\) is distinct from the one returned by \(\apply(\alpha^\prime,\beta^\prime,\bop)\).

Two observations are due here. First, this redundancy is still under control when calling \apply\ only once: \apply\
runs in \(O(n m)\) time, where \(n\) and \(m\) are the sizes of input SDDs. However, this redundancy becomes problematic when calling \apply\ multiple times (as in
bottom-up compilation), in which case quadratic performance is no longer as attractive. For example, if we use \apply\ to combine \(m\) SDDs of size \(n\) each,
all we can say is that the output will be of size \(O(n^m)\).
The second observation is that the previous redundancy will not occur when working with reduced SDDs
due to their canonicity: Two SDDs are equivalent iff they are represented by the same structure in memory.\footnote{This is due to the technique of 
{\em unique nodes} from OBDDs; see $\uniqueD$ in Algorithm~\ref{alg:apply}.}

This analysis points to the following conclusion: While \apply\ has a quadratic complexity on unreduced SDDs, it may have a worse average complexity than \apply\
on reduced SDDs. Our next experiment is indeed directed towards this hypothesis.

For all benchmarks in Table~\ref{tab:sizes} that can be compiled without vtree search, 
we intercept all non-trivial calls to \apply{} (when $|\alpha|\cdot|\beta| > 500$) and report the size of the output $|\alpha \circ \beta|$ divided by $|\alpha|\cdot|\beta|$. 
For unreduced SDDs, we know that $|\alpha \circ \beta| = O(|\alpha|\cdot|\beta|)$ and that these ratios are therefore  bounded above by some constant. For reduced SDDs, 
however, Theorem~\ref{theo:transformations} states that there exists no constant bound.

\begin{figure}[tb]
  \centering
  \subfloat[Reduced SDDs]{
    \label{fig:size-csdd}
    \includegraphics[height=0.145\textwidth,clip=true,angle=0]{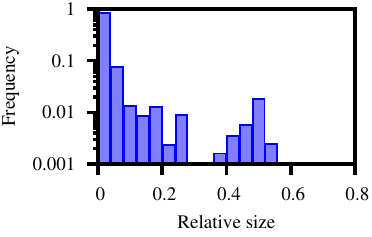}
  }
  \subfloat[Unreduced SDDs]{
    \label{fig:size-sdd} \includegraphics[height=0.145\textwidth,clip=true,angle=0]{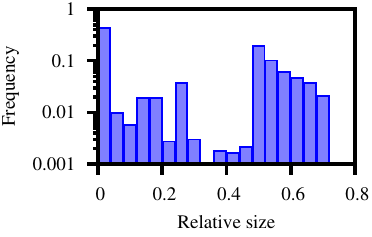}
  }
  \caption{Relative SDD size.}
  \label{fig:sizeratios}
\end{figure}

Figure~\ref{fig:sizeratios} shows the distribution of these ratios for the two methods (note the log scale). 
The number of function calls is 67,809 for reduced SDDs, vs.\ 1,626,591 for unreduced ones.
The average ratio is 0.027 for reduced, vs.\ 0.101 for unreduced.
Contrasting the theoretical bounds, reduced \apply{} incurs much smaller blowups than unreduced \apply{}. 
This is most clear for ratios in the range $[0.48,0.56]$, covering 30\% of the unreduced, but only 2\% of the reduced calls.
  
\begin{figure}[tb]
  \centering
  \subfloat[Reduced SDDs]{
    \label{fig:calls-csdd}
    \includegraphics[height=0.145\textwidth,clip=true,angle=0]{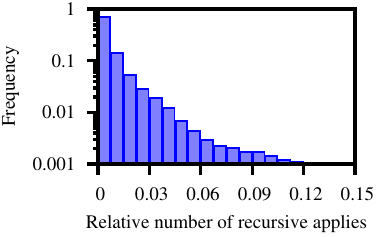}
  }
  \subfloat[Unreduced SDDs]{
    \label{fig:calls-sdd} \includegraphics[height=0.145\textwidth,clip=true,angle=0]{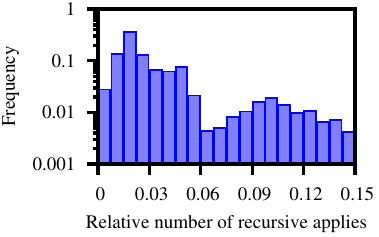}
  }
  \caption{Relative number of recursive \apply{} calls.}
  \label{fig:callsratios}
\end{figure}
The results are similar when looking at runtime for individual \apply{} calls, which we measure by the number of recursive \apply{} calls $r$.
Figure~\ref{fig:callsratios} reports these, again relative to $|\alpha|\cdot|\beta|$.
The ratio $r/(|\alpha|\cdot|\beta|)$ is on average 0.013 for reduced SDDs, vs.\ 0.034 for unreduced ones These results corroborate our earlier analysis, suggesting
that canonicity is quite important for the performance of bottom-up compilers as they make repeated calls to the \apply\ function.
In fact, this can be more important than a polytime \apply, perhaps contrary to common wisdom which seems to emphasize the importance of polytime \apply\
in effective bottom-up compilation (e.g.,~\citet{pipatsrisawat08compilation}).

\section{Bounded Vtrees}

A {\em bounded vtree} is one for which the number of variables in any left subtree is bounded. This includes right-linear vtrees which give rise to OBDDs, since each
left subtree contains a single variable in this case. We now have the following.

\begin{theorem}
The time and space complexity of Algorithm~\ref{alg:apply}, with compression, is in \(O(nm)\), where \(n\) and \(m\) are the sizes of its inputs, assuming that the
input SDDs are reduced and respect a bounded vtree.
\end{theorem}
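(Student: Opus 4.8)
The plan is to combine two consequences of the bounded-vtree hypothesis. Fix a constant $k$ bounding the number of variables in any left subtree of the vtree, write $c = 2^k$, and (matching the code shown) argue for the normalized version of the algorithm, the trimmed case being analogous. The first consequence is a \emph{node-size} bound: if a decision node respects an internal vtree node $v$, its primes form a consistent, pairwise-disjoint, exhaustive family of functions over the at most $k$ variables of $v$'s left subtree; since that Boolean cube has only $2^k$ points, there are at most $c$ primes, hence at most $c$ elements per decision node. Consequently a reduced SDD of size $n$ contains $\Theta(n)$ decision nodes, and $O(n)$ nodes in total once literal and constant leaves are counted.

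The second consequence, which uses canonicity essentially, is a \emph{bounded-subtree} bound. Let $v_\ell$ be the left child of any internal vtree node; its subtree has at most $k$ variables, hence at most $2k-1$ nodes, and by canonicity each node $w$ of that subtree is respected by at most $2^{2^{|w|}} \le 2^{2^k}$ distinct reduced SDDs. So during an entire run of \apply\ only $O(1)$ distinct reduced SDDs --- hence only $O(1)$ distinct cache triples --- ever respect $v_\ell$'s subtree, and therefore the total cost of all recursive \apply\ calls that operate strictly inside a left subtree is $O(1)$ per left child, i.e. $O(n)$ overall. The same observation makes the two ``local'' subroutines cheap. $\compress$, applied to a partition respecting $v$, sees at most $c^2$ elements; it groups them by sub using $O(c^2)$ equality tests, each $O(1)$ because canonicity turns sub-equality into a pointer comparison; and it forms a group's disjoined prime by iterated $\apply(\cdot,\cdot,\lor)$ calls that live inside $v_\ell$'s subtree and so cost $O(1)$ each --- hence $\compress$ is $O(1)$. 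Similarly $\uniqueD(\gamma)$ hashes and looks up the element set $\gamma$, which has $O(1)$ elements, each a pair of node pointers, so it is $O(1)$; and the consistency test on line~\ref{ln:consistency} is $O(1)$ because a reduced SDD is inconsistent iff it is the constant $\bot$.

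The global accounting then follows the classical OBDD analysis. Every recursive \apply\ call spawned on line~\ref{ln:consistent primes} or in the sub-recursion has as its two operands a node of $\alpha$ and a node of $\beta$ that respect a common vtree node; the only recursive calls that violate this are the $\apply(\cdot,\cdot,\lor)$ calls triggered inside $\compress$, and those, by the previous paragraph, stay inside left subtrees and contribute only $O(n)$ in total. By the cache, each triple (node of $\alpha$, node of $\beta$, operator) is processed at most once; since $\alpha$ has $O(n)$ nodes, $\beta$ has $O(m)$ nodes, and only the constantly many operators in $\{\bop,\wedge,\lor\}$ occur, there are $O(nm)$ such triples. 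Processing one triple costs $O(1)$: a double loop over at most $c^2$ element pairs, each iteration doing $O(1)$ work (a recursive call charged to the cache, an $O(1)$ consistency test, an $O(1)$ insertion into $\gamma$), followed by an $O(1)$ $\compress$ and an $O(1)$ $\uniqueD$. Hence the running time is $O(nm)$. For space, the cache holds $O(nm)$ entries, and every newly created decision node is the $\uniqueD$-value of some processed triple, so the returned SDD also has $O(nm)$ decision nodes, each of size $O(1)$; the total space is therefore $O(nm)$.

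I expect the main obstacle to be exactly the point where the standard $O(nm)$ argument for unreduced \apply\ breaks: $\compress$ must disjoin primes and therefore invokes \apply\ on freshly built SDD nodes that are not subnodes of the inputs, so the invariant ``every recursive call sees only input nodes'' --- the backbone of the quadratic bound --- fails. Pinning those escaping calls down is what the bounded-subtree observation is for; once they are confined to left subtrees that host only $O(1)$ distinct reduced SDDs, the rest is routine bookkeeping. It is worth stressing that canonicity is used in two essential places here --- to keep equality and hash lookups at $O(1)$, and, via Theorem~\ref{theo:transformations}, because without the bounded-vtree hypothesis the statement is simply false.
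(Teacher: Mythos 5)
Your proof is correct and rests on exactly the same key observation as the paper's (one-paragraph) argument: the only place compression deviates from the standard $O(nm)$ accounting is the recursive $\apply(\cdot,\cdot,\lor)$ calls that disjoin primes, and because primes respect left subtrees with a bounded number of variables, those calls cost $O(1)$ each, so \apply\ retains its uncompressed complexity. You supply considerably more bookkeeping than the paper does (node-size bounds, the count of distinct reduced SDDs per left subtree, the cache accounting), but the approach is essentially identical.
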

The compression step of Algorithm~\ref{alg:apply} identifies elements \((p_i,s)\) and \((p_j,s)\) that share sub \(s\), and merges these elements into the element
\((p_i \vee p_j,s)\) by calling \apply\ recursively to disjoin primes \(p_i\) and \(p_j\). Since the vtree is bounded, primes \(p_i\) and \(p_j\) must be over a bounded number
of variables. Hence, the complexity of compression is bounded, leading \apply\ to have the same complexity with or without compression.

 For example, in right-linear vtrees (i.e., OBDDs), primes are literals over a single variable. Hence, all decision nodes are of the form $\{(X,\alpha),(\neg X,\beta)\}$. 
 On these, compression occurs when $\alpha = \beta$, resulting in the partition $\{(X \vee \neg X,\alpha)\} = \{(\top,\alpha)\}$, which trimming replaces by $\alpha$. 
 This corresponds to the OBDD reduction rule that eliminates decision nodes with isomorphic children~\citep{Bryant86}.
 
 \citet{xue12} showed a class of Boolean functions whose OBDDs have exponential size with respect to certain orders (right-linear vtrees), but which have SDDs of linear
 size when the vtrees are not right-linear (but have the same left-to-right variable order). The used vtrees, however, were not bounded. 
 It would be interesting to see if a similar result can be obtained for bounded vtrees.

\section{Conclusions} 
% \label{s:conclusion}
We have shown that the \apply\ function on reduced SDDs can take exponential time in  the worst case, resolving a question that has been open since SDDs were first
introduced. We have also pursued some of the theoretical and practical implications of this result. On the theoretical side, we showed that it implies 
an exponential complexity for various transformations, such as conditioning and existential quantification. We also identified the class of bounded vtrees, for
which the \apply\ function can be implemented in polynomial time even for reduced SDDs. On the practical side, we argued empirically that working with reduced SDDs remains favorable, 
despite the polytime complexity of the \apply\ function on unreduced SDDs. The canonicity of reduced SDDs, we argued, is 
more valuable for bottom-up compilation than a polytime \apply\ due to its role in facilitating caching and dynamic vtree search. Our findings appear contrary to
some of the common wisdom on the relationship between bottom-up compilation, canonicity and the complexity of the \apply\ function.

\section{ Acknowledgments}

We thank Arthur Choi, Doga Kisa, and Umut Oztok for helpful suggestions.
This work was supported by ONR grant
\#N00014-12-1-0423, NSF grant \#IIS-1118122,
NSF grant \#IIS-0916161, and the Research Foundation-Flanders (FWO-Vlaanderen).
GVdB is also at KU~Leuven, Belgium.

% \clearpage

% \twocolumn[
%   \begin{@twocolumnfalse}
%   \begin{center}
%     {\LARGE{\bf \mytitle
%           \vspace{20pt}
%                 \textsc{Supplementary Material}
%           \vspace{60pt}
%     }}
%   \end{center}
%   \end{@twocolumnfalse}
%   ]
\appendix
\section{Complexity of Transformations}  \label{app:sdd trans}
% \label{s:transform}
We now prove Theorem~\ref{theo:transformations}, stating that the results in Table~\ref{tab:transformations} hold.
We will first show the results for unreduced SDDs, and then prove the results for reduced SDDs.

For \emph{unreduced SDDs}, \citet{Darwiche11} showed support for $\bac$, $\boc$, and $\nc$ (see Algorithm~\ref{alg:apply}).
We show support for unreduced $\cd$ next.
\begin{theorem}
  We can condition an unreduced SDD on a literal $\ell$ in polynomial time by replacing $\ell$ by $\top$ and $\neg \ell$ by $\bot$. Moreover, when removing all elements whose prime is equivalent to $\bot$, the resulting sentence is an unreduced SDD.
\end{theorem}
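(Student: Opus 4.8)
The plan is to verify the two claims — that $g(\alpha)$ represents $\alpha\mid\ell$, and that $g(\alpha)$ is a well-formed unreduced SDD — by induction on the structure of the SDD, where $g$ denotes the map that replaces every $\ell$-leaf by $\true$, every $\neg\ell$-leaf by $\false$, and then removes every element $(p,s)$ whose transformed prime is equivalent to $\false$. Let $X$ be the variable of $\ell$, so that $X$ labels a leaf $v_X$ of the underlying vtree $T$. For semantic correctness I would use that an SDD is a circuit in negation normal form: for any NNF circuit computing $f(X,\Z)$, hard-wiring the $\ell$-leaves to true and the $\neg\ell$-leaves to false yields a circuit computing $f\mid\ell$; moreover, deleting an element $(p,s)$ with $p\equiv\false$ removes a disjunct $p\wedge s\equiv\false$ from a decision node and so does not change the represented function. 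Hence $g(\alpha)$ represents $\alpha\mid\ell$.

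For structural well-formedness I would prove by induction that $g(\alpha)$ is an unreduced SDD that respects $T$, now with $X$ possibly non-essential (which an SDD is permitted to have — just as the function in Theorem~\ref{theo:compression} keeps its non-essential variable $Z$). The base cases are the constants and literals ($\ell\mapsto\true$, $\neg\ell\mapsto\false$, other literals unchanged). For a decision node with elements $(p_1,s_1),\dots,(p_n,s_n)$ respecting an internal vtree node $v$ with left variables $\mathbf{L}$ and right variables $\mathbf{R}$, there are three cases: if $X\notin\mathbf{L}\cup\mathbf{R}$ the node and its descendants are untouched; if $X\in\mathbf{R}$ then only the subs are transformed, to $s_i\mid\ell$, and the partition of primes is unchanged; if $X\in\mathbf{L}$ then only the primes change, to $p_i\mid\ell$, and the point is to check that the surviving elements $\{(p_i\mid\ell,\,s_i) : p_i\mid\ell\not\equiv\false\}$ still form an $(\mathbf{L},\mathbf{R})$-partition. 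This is where conditioning's commuting with the Boolean connectives is used: mutual exclusivity is preserved since $(p_i\mid\ell)\wedge(p_j\mid\ell)=(p_i\wedge p_j)\mid\ell\equiv\false$; exhaustiveness since $\bigvee_i(p_i\mid\ell)=(\bigvee_i p_i)\mid\ell\equiv\true$, and dropping the $\false$-equivalent primes does not affect this disjunction; and deleting the inconsistent primes leaves the rest consistent. Exhaustiveness also guarantees at least one element survives, so no empty decision node is created. Together with the inductive hypothesis applied to each transformed prime and sub, this shows $g(\alpha)$ respects $T$. Note $g$ may create nodes of the form $\{(\true,\beta)\}$ and elements with $\true$ or $\false$ subs, which is exactly why the output is claimed only to be an unreduced (not necessarily trimmed or compressed) SDD.

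For complexity, $g$ can be evaluated in a single bottom-up pass: replace the relevant leaves, and at each node compute a flag recording whether the transformed sub-SDD rooted there is equivalent to $\false$ (a $\false$-leaf is; a decision node is iff, after deleting elements whose prime-flag is set, every remaining element has its sub-flag set — correctness of this test follows from the fact, established in the induction above, that the surviving primes still form a genuine partition, so a consistent prime can annihilate only a $\false$ sub). This flag is precisely the information needed to decide which elements to delete, so the whole computation is linear, hence polynomial, in $|\alpha|$. I expect the only part requiring genuine care is the structural induction — keeping track of the three positions of $v_X$ relative to a decision node and confirming the partition axioms survive conditioning — but none of it is deep.
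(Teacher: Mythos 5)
Your proposal is correct and follows essentially the same route as the paper's proof: the heart of both arguments is that conditioning commutes with conjunction and disjunction, so consistency (enforced by deletion), exhaustiveness, and mutual exclusivity of the primes are preserved. Your additional details --- the case analysis on where the conditioned variable sits in the vtree, the observation that exhaustiveness guarantees a nonempty decision node, and the explicit flag-based linear-time detection of $\bot$-equivalent primes --- are all sound elaborations of points the paper leaves implicit.
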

\begin{proof}
It is clear that the procedure transforms $\alpha$ into a sentence that is logically equivalent to $\alpha|\ell$:
the first step directly follows the definition of conditioning, and the second step maintains logical equivalence.
We need to show next that the result is syntactically an SDD, by showing that the primes in its partitions are consistent, exhaustive, and mutually exclusive. 
The second step enforces consistency of the primes. Moreover, if the primes are exhaustive, that is, $p_1 \lor \dots \lor p_n \equiv \top$, then $p_1|\ell \lor \dots \lor p_n|\ell \equiv (p_1 \lor \dots \lor p_n)|\ell \equiv \top|\ell \equiv \top$, and the result of conditioning is also exhaustive.
Finally, when $p_i$ and $p_j$ are mutually exclusive, that is, $p_i \land p_j \equiv \bot$, then $p_i|\ell \land p_j|\ell \equiv (p_i \land p_j)|\ell \equiv \bot|\ell \equiv \bot$, and the conditioned primes are also mutually exclusive.
\end{proof}
Support for $\sfo$ follows from the support for $\cd$ and $\boc$.
The negative results for $\fo$, $\ac$ and $\oc$ follow from identical OBDD results in \citet{darwicheJAIR02}, 
and the fact that OBDDs are a special case of~SDDs.

For \emph{reduced SDDs}, the negative $\fo$, $\ac$ and $\oc$ results also follow from OBDD results.
It is also clear from Algorithm~\ref{alg:apply} that negating a reduced SDD $\alpha$ by computing $\apply(\alpha,\top,\text{xor})$ does not cause any subs to become equivalent. Therefore, negating a reduced SDD leads to a reduced result, and reduced SDDs support $\nc$.
The remaining results in Table~\ref{tab:transformations}, on $\cd$, $\sfo$, $\bac$ and $\boc$ are discussed next.

\begin{theorem} \label{thm:cond}
There exists a class of Boolean functions \(f(X_1, \ldots, X_n)\) and vtrees \(T_n\) for which the reduced SDD has size \(O(n)\), yet the
reduced SDD for the function \(f(X_1,\ldots,X_n) | \ell\) has size \(\Omega(2^n)\) for some literal \(\ell\).
\end{theorem}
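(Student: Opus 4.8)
The plan is to reuse the hard function from Theorem~\ref{theo:compression}, but attach to it an extra variable whose conditioning ``unlocks'' the exponential blow-up. Concretely, I would start from $f_n^b(\X,\Y) = \bigvee_{i=1}^n (\bigwedge_{j=1}^{i-1}\neg Y_j)\wedge Y_i\wedge X_i$, whose compressed (\textbf{XY})-partition has $2^n$ elements by Lemma~\ref{thm:expsddx}. I would then define a guarded version $f(\ell,\X,\Y)$ over $2n+1$ variables so that $f|\neg\ell$ is something trivial (e.g.\ $\bot$ or a single term), while $f|\ell \equiv f_n^b(\X,\Y)$. A convenient choice is $f = (\neg\ell \wedge g) \vee (\ell \wedge f_n^b)$ where $g$ is a function with a small reduced SDD that ``masks'' the structure of $f_n^b$ when $\ell$ is false; one can even take $g=\bot$, giving $f = \ell\wedge f_n^b$, provided the masking still holds after compression.

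The first step is to exhibit an $O(n)$-size reduced SDD for $f$ with respect to a suitable vtree $T_n$. Here I would put $\ell$ high in the vtree (near the root), with the $\X$-$\Y$ subtree below it, mirroring the $Z$-at-the-top trick from Theorem~\ref{theo:compression}: the root element $(\ell, f_n^b)$ has a sub $f_n^b$ that must be given as a compressed partition only \emph{below} the $\ell$-node, but we are free to normalize/trim so that $f_n^b$ itself is represented as an uncompressed-friendly structure — wait, no, reducedness is forced. So the real content is that $f$ as a \emph{whole} admits a small reduced SDD even though $f_n^b$ alone does not. The way to get this is to keep $\ell$ out of the left subtree: with vtree of the form (root; $\ell$ on the right, $(\X,\Y)$ on the left), the root partition of $f=\ell\wedge f_n^b$ has the single element-set structure forced by the unique compressed (\textbf{XY},$\{\ell\}$)-partition, namely $\{(f_n^b,\ell),(\neg f_n^b,\bot)\}$ — but that still contains $f_n^b$ as a prime, reintroducing the blow-up. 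The fix is to instead place $\ell$ in a \emph{left} subtree that is a sibling of $\X$, so that $\ell$ becomes part of the primes: choose the vtree so the left subtree carries $(\ell,\X)$ and the right carries $\Y$. Then $f = \bigvee_i (\bigwedge_{j<i}\neg Y_j)\wedge Y_i\wedge X_i \wedge \ell$, and instantiating $(\ell,\X)$ gives, for the single state $\ell=\bot$, the subfunction $\bot$, collapsing $2^n$ of the would-be distinct subs to one. This is exactly the linear/quadratic-size construction of Theorem~\ref{theo:compression} in disguise, so I would invoke that theorem's construction almost verbatim to get a reduced SDD of size $O(n)$ (or $O(n^2)$; linear after choosing right-linear sub-vtrees and noting the $\Y$-subs are terms).

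The second step is the lower bound: conditioning on $\ell$ yields $f|\ell \equiv f_n^b(\X,\Y)$, which by Lemma~\ref{thm:expsddx} has a unique compressed (\textbf{XY})-partition with $2^n$ elements, hence its reduced SDD (with respect to the induced vtree on $\X\cup\Y$) has size $\Omega(2^n)$. I would note that the induced vtree after conditioning is still a legitimate vtree over $\X\cup\Y$, and that $f|\neg\ell\equiv\bot$ (or a term) has a constant-size reduced SDD, so the blow-up genuinely comes from the one literal $\ell$. The main obstacle is the \emph{first} step — verifying that $f$ (before conditioning) really does have a small \emph{reduced} SDD, i.e.\ that the collapsing of primes under $\ell=\bot$ survives compression at every decision node and does not itself trigger a cascade of distinct subs elsewhere. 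I expect this to go through by direct appeal to the explicit uncompressed-then-compressed construction in the proof of Theorem~\ref{theo:compression}, where exactly this collapsing ($n{+}1$ elements sharing sub $\false$) is what keeps the compressed size polynomial; the only new bookkeeping is tracking the extra literal $\ell$ through those partitions.
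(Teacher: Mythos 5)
There is a genuine gap, and it sits exactly where you flagged ``the main obstacle'': the function $f=\ell\wedge f_n^b$ does not have a small \emph{reduced} SDD for either of the vtrees you consider, so the first half of the theorem fails for your construction. With $\ell$ on the sub side (primes over $\X\cup\Y$, sub over $\{\ell\}$) you already saw that $f_n^b$ reappears as a prime. But the ``fix'' of moving $\ell$ into the left subtree next to $\X$ does not work either: the compressed $(\{\ell\}\cup\X,\Y)$-partition of $\ell\wedge f_n^b$ groups instantiations of $(\ell,\X)$ by their $\Y$-subfunctions, and while the $2^n$ states with $\ell$ false all collapse onto the sub $\false$, the states with $\ell$ true still induce the pairwise-distinct subfunctions $f_n^b(\x,\Y)$ --- this distinctness is precisely the content of Lemma~\ref{thm:expsddx}. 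So the compressed partition still has about $2^n$ elements and the reduced SDD of $f$ is already exponential \emph{before} conditioning. Your appeal to Theorem~\ref{theo:compression} also inverts its logic: there, the merging of elements that share a sub is what \emph{causes} the exponential blow-up (the merged prime becomes $f_n^b$); the polynomial-size object in that proof is the \emph{uncompressed} SDD, which you are not allowed to use here since the present theorem is about reduced SDDs on both sides.

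The missing idea is a mechanism that keeps the subs of the polynomially many simple primes pairwise \emph{distinct} before conditioning (so that no compression is triggered and the simple-prime partition is already the reduced one), and makes them collide only \emph{after} conditioning. The paper achieves this by adding $n$ ``tag'' variables $Z_1,\dots,Z_n$ and one switch variable $W$ on the sub side, taking $f^c_n=\bigvee_{i=1}^n\bigl(\bigwedge_{j<i}\neg Y_j\bigr)\wedge Y_i\wedge\bigl[(X_i\wedge(W\vee Z_i))\vee(\neg X_i\wedge Z_i)\bigr]$ with vtree $((\X,\Y),\Z\cup\{W\})$. Its root partition has the same $2n+1$ simple primes as in Theorem~\ref{theo:compression}, but now with subs $W\vee Z_i$, $Z_i$, and $\false$, all distinct, so this partition is compressed as given and the reduced SDD has linear size. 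Conditioning on $W$ sends all $n$ subs $W\vee Z_i$ to $\true$, forcing compression to merge their primes into $f_n^b(\X,\Y)$, which is exponential with respect to the left subtree by Lemma~\ref{thm:expsddx}. A single guard literal, as in your proposal, cannot play this role.
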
 

\begin{proof}
Consider the function
\begin{align*}
 &f^c_n(\X,\Y,\Z,W) = \\
 &\qquad\bigvee_{i=1}^n \bigwedge_{j=1}^{i-1} \neg Y_j  \land Y_i \land \left[ \left( X_i \land \left(W \lor Z_i \right) \right) \lor \left(\neg X_i  \land Z_i \right) \right]
\end{align*}
and the vtree depicted in Figure~\ref{fig:vtree:c}.
  \begin{figure}
    \centering
    \subfloat[For conditioning]{
      \label{fig:vtree:c}
      \begin{tikzpicture}[vtree]
      \node{\Tree [.1 [.2 $\X$ $\Y$ ] $\Z \cup \{W\}$ ]};
      \end{tikzpicture}
    }
    ~
    \subfloat[For forgetting]{
      \label{fig:vtree:forget}
      ~~~~~\begin{tikzpicture}[vtree]
      \node{\Tree [.1 
          $L$
          $\X$
        ]
        };
      \end{tikzpicture}~~~~~    
    }
    \caption{Vtree structures used in proofs}
    \label{fig:vtree2}
  \end{figure}
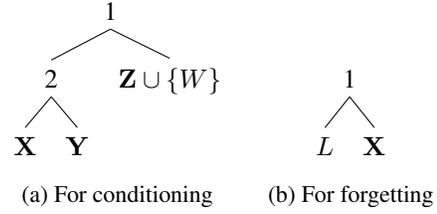

The root of the reduce SDD for $f^c_n$ is an ($\X\Y,\Z{}W$)-partition that respects vtree node 1, consisting of elements
\begin{align*}
\bigcup_{i=1}^n & \left\{ \left( \bigwedge_{j=1}^{i-1} \neg Y_j \land Y_i \land X_i,~~W \lor Z_i \right), \right.\\
&~~~ \left. \left( \bigwedge_{j=1}^{i-1} \neg Y_j \land Y_i \land \neg X_i,~~Z_i \right) \right\},
\end{align*}
together with $\left( \bigwedge_{j=1}^{n} \neg Y_j,~\false \right)$ to make it exhaustive.
The size of this partition is linear in $n$. It has the same primes as the unreduced SDD for $f^a_n$ used in the proof of Theorem~\ref{theo:compression}, only now the partition is compressed, as all subs are distinct.

The primes of this partition can be represented as compressed \xypart{}s, exactly as in the second step for Theorem~\ref{theo:compression}. The remaining primes and subs (over $\X$, over $\Y$, and over $\Z \cup \{W\}$) are all simple conjunctions or disjunctions of literals that have a linear reduced SDD representation for any vtree.

We have now obtained a polysize SDD. However, when we condition this SDD on the literal $W$, all $n$ subs of the form $W \lor Z_i$ become equivalent to $\top$. Their elements need to be compressed into the single element $\left( \bigvee_{i=1}^n\bigwedge_{j=1}^{i-1} \neg Y_j \land Y_i \land X_i,~\top \right)$. Its prime is again the function $f^b_n(\X,\Y)$ for which Lemma~\ref{thm:expsddx} states that the reduced SDD wrt vtree node 2 has exponential size. 
%Therefore, conditioning is not a polytime operator on $f^c_n(\X,\Y,\Z,W)$ for vtree~$T$ and literal~$W$.
\end{proof}

\begin{theorem} \label{thm:conj}
There is a class of Boolean functions \(f(X_1, \ldots, X_n)\) and vtrees \(T_n\) for which the reduced SDD has size \(O(n)\), yet
the reduced SDD for the Boolean function \(f(X_1, \ldots, X_n) \wedge \ell\) has size \(\Omega(2^n)\) for some literal \(\ell\). 
\end{theorem}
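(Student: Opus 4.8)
The plan is to reduce the conjunction result to the conditioning result of Theorem~\ref{thm:cond} by absorbing the literal $\ell$ into a fresh variable. The key observation is that conditioning on a literal is a special case of conjunction with a term, so the hard direction is already available; what remains is to encode the ``condition on $W$'' operation as a conjunction with a single literal while keeping the base SDD polynomial size. First I would take the function $f^c_n(\X,\Y,\Z,W)$ from the proof of Theorem~\ref{thm:cond}, which has a reduced SDD of size $O(n)$ with respect to the vtree in Figure~\ref{fig:vtree:c}, and for which conditioning on $W$ produces a reduced SDD of size $\Omega(2^n)$.

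The main step is this: set $\ell = W$ and observe that $f^c_n \wedge W \equiv f^c_n | W$ (conditioning on a positive literal $W$ is logically equivalent to conjoining with $W$, since the only models of $f^c_n \wedge W$ are exactly the models of $f^c_n$ that set $W = \true$, which when read off as an SDD over the same vtree give precisely the conditioned function). Hence the reduced SDD for $f^c_n \wedge W$ with respect to $T_n$ is identical to the reduced SDD for $f^c_n | W$, which by the argument in Theorem~\ref{thm:cond} has size $\Omega(2^n)$: conjoining with $W$ forces all $n$ subs of the form $W \vee Z_i$ to collapse to $\top$, their elements must be merged by compression into the single element with prime $f^b_n(\X,\Y)$, and Lemma~\ref{thm:expsddx} gives the exponential lower bound on the $(\X,\Y)$-partition of $f^b_n$ with respect to vtree node~$2$. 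Since $f^c_n$ itself has a reduced SDD of size $O(n)$, the class $f(X_1,\dots,X_n) = f^c_n$ together with the vtrees $T_n$ and the literal $\ell = W$ witnesses the claim.

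I expect the only genuine subtlety — and the step I would state carefully rather than wave at — is the claim that the reduced SDD for $f^c_n | W$ and the reduced SDD for $f^c_n \wedge W$ are literally the same object with respect to the fixed vtree $T_n$. This is immediate at the level of Boolean functions, but one should note that $W$ sits in the right subtree of the root, so conjoining with $W$ only affects the subs (not the primes) of the root partition, and the collapse-and-compress argument goes through exactly as in Theorem~\ref{thm:cond}; there is no interaction with the $\X\Y$ side of the partition other than the forced merge. A clean alternative, if one prefers to avoid the ``conditioning equals conjunction with a literal'' identification, is to introduce a brand-new variable $U$, place it at the far right of the vtree, use the function $f^c_n \wedge U$ (or replace $W$ by $U$ throughout and note $f^c_n$ already has polysize SDD), and then conjoin with the literal $U$; but this is cosmetic, and the direct argument above is shortest. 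The whole proof is therefore a short corollary-style argument riding on Theorem~\ref{thm:cond}.
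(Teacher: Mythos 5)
Your construction is the paper's: the same function $f^c_n$, the same vtree, the same literal $\ell = W$, and the same punchline that compression forces the $n$ elements whose subs coincide to merge into a single element with prime $f^b_n(\X,\Y)$, to which Lemma~\ref{thm:expsddx} applies. However, the step you single out as the ``only genuine subtlety'' and declare immediate is in fact false: $f^c_n \wedge W \not\equiv f^c_n | W$. Conditioning on $W$ yields a function that no longer mentions $W$ (and is satisfied by models with $W = \false$, e.g.\ any model with $Y_1 = \true$, $X_1 = \false$, $Z_1 = \true$), whereas $f^c_n \wedge W$ is falsified by every such model; the correct relationship is $f^c_n \wedge W \equiv W \wedge (f^c_n|W)$. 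Consequently the two reduced SDDs are not ``literally the same object,'' and the subs $W \vee Z_i$ do not collapse to $\top$ under conjunction with $W$ --- they collapse to $W \wedge (W \vee Z_i) \equiv W$.

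The repair is immediate, and is exactly what the paper does: the lower bound never needed the subs to become $\top$, only that all $n$ of them become pairwise equivalent, since compression then merges their elements into the single element $\left( \bigvee_{i=1}^n\bigwedge_{j=1}^{i-1} \neg Y_j \land Y_i \land X_i,~W \right)$, whose prime is $f^b_n(\X,\Y)$ and hence requires a $(\X,\Y)$-partition with $2^n$ elements. So your proof goes through once you drop the false equivalence and argue directly about the conjunction; as written, though, the black-box reduction to Theorem~\ref{thm:cond} does not stand.
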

\begin{proof}
  Consider again the reduced SDD for $f^c_n$ that was constructed in the proof of Theorem~\ref{thm:cond} for the vtree in Figure~\ref{fig:vtree:c}.
  Conjoining this SDD with the SDD for literal $W$ makes the $n$ subs of the form $W \lor Z_i$ equivalent to $W \land (W \lor Z_i) = W$. 
  Compressing these creates the element $\left( \bigvee_{i=1}^n\bigwedge_{j=1}^{i-1} \neg Y_j \land Y_i \land X_i,~W \right)$, whose prime is again $f^b_n(\X,\Y)$, which has no polysize reduced SDD for vtree node~2.
\end{proof}

This already proves that \apply{} is worst-case exponential when performing conjunctions on reduced SDDs.
Given that reduced SDDs support polytime negation, this result generalizes to any binary Boolean operator $\circ$ that is functionally complete together with negation~\citep{wernick1942complete}. Support for these operators would allow us to do polytime conjunction by combining $\circ$ and negation. One such operator is disjunction, which is therefore also is worst-case exponential.

Suppose now that we can perform singleton forgetting in polytime, which is defined as $\exists L.\alpha = (\alpha|L) \lor (\alpha|\neg L)$. 
Then given any two reduced SDDs $\beta$ and $\gamma$ respecting the same vtree $T$, we can obtain $\beta \lor \gamma$ in polytime 
as follows.  Add a new variable $L$ to vtree $T$, as depicted in Figure~\ref{fig:vtree:forget}.
The reduced SDD $\alpha$ for the function $(L \land \beta) \lor (\neg L \land \gamma)$ has the root partition $\{(L, \beta),(\neg L, \gamma)\}$.
Forgetting $L$ from $\alpha$ results in the reduced SDD for $\beta \lor \gamma$. Hence, if single forgetting can be done in polytime, then bounded disjunction can
also be done in polytime. Since the latter is impossible, the former is also impossible.

\bibliographystyle{aaai}
\bibliography{references}

\end{document}